\documentclass[wcp]{jmlr}

\usepackage[utf8]{inputenc} 
\usepackage{algorithm,algorithmic}
\usepackage[T1]{fontenc}  
\usepackage{hyperref}   
\usepackage{url}       
\usepackage{booktabs}      
\usepackage{amsfonts}   

\usepackage{nicefrac}     
\usepackage{microtype} 
\usepackage{xcolor}      
\usepackage{mathtools}
\usepackage{rotating}
\usepackage{longtable}
\usepackage{natbib}

\pagenumbering{gobble}

\makeatletter
\let\Ginclude@graphics\@org@Ginclude@graphics 
\makeatother
\jmlrvolume{189}
\editors{Emtiyaz Khan and Mehmet G\"{o}nen}
\jmlryear{2022}
\jmlrworkshop{ACML 2022}

\title[RoLNiP]{RoLNiP: Robust Learning Using Noisy Pairwise Comparisons}


 \author{\Name{Samartha S Maheshwara} \Email{samartha.s@students.iiit.ac.in}\and
  \Name{Naresh Manwani} \Email{naresh.manwani@iiit.ac.in}\\
   \addr Machine Learning Lab, Kohli Research Block, IIIT Hyderabad, India}




\mathchardef\mhyphen="2D
\def \R {\mathbb{R}}

\def \Y {\mathcal{Y}}

\def \x {\mathbf{x}}
\def \X {\mathcal{X}}

\def \E {\mathbb{E}}

\begin{document}

\maketitle

\begin{abstract}
This paper presents a robust approach for learning from noisy pairwise comparisons. We propose sufficient conditions on the loss function under which the risk minimization framework becomes robust to noise in the pairwise similar dissimilar data. Our approach does not require the knowledge of noise rate in the uniform noise case. In the case of conditional noise, the proposed method depends on the noise rates. For such cases, we offer a provably correct approach for estimating the noise rates. Thus, we propose an end-to-end approach to learning robust classifiers in this setting. We experimentally show that the proposed approach RoLNiP outperforms the robust state-of-the-art methods for learning with noisy pairwise comparisons.
\end{abstract}
\begin{keywords}
Pairwise comparisons, label noise, robust learning, noise rate estimation
\end{keywords}

\section{Introduction}
Standard classifier learning requires pointwise labeled data. However, in many applications, getting pointwise labeled data is not possible. In many applications, the data comes in the pairwise comparison of two examples. For example, whether two samples belong to the same category or not. A frequent use case of this setting is a large-scale annotation framework, e.g., crowd clustering \citep{NIPS2011_c86a7ee3, Yi2012CrowdclusteringWS}.

\citet{pmlr-v80-bao18a} propose an approach that can learn from similar and unlabeled data points. \citet{DBLP:journals/neco/ShimadaBSS21}  propose an empirical risk minimization framework to learn from pairwise similarity data. However, their approach is limited to linear methods. \citep{Hsu2019MulticlassCW} propose a probabilistic framework for learning from pairwise similar, dissimilar data and extend it for deep learning settings.

In practical situations, the pairwise annotations received suffer from noise due to subjective or objective issues. Thus, two examples may belong to the same class and are annotated to be different and vice-versa. 
Figure~\ref{fig:my_label} shows a toy setting of pairwise similar dissimilar data and its noisy version.

\begin{figure}[h]
    \centering
    \includegraphics[scale=0.08]{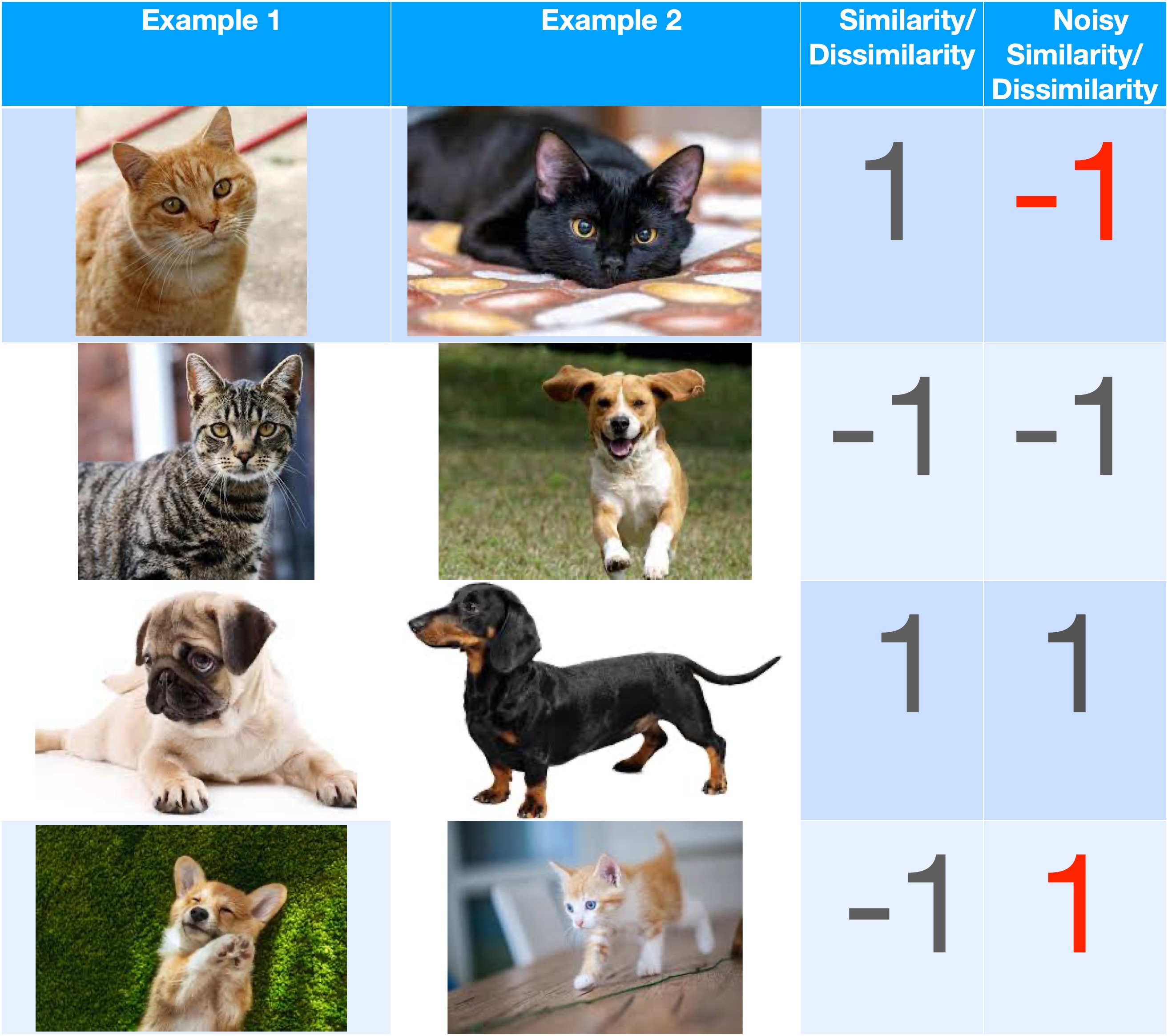}
    \caption{Pairwise Similarity/Dissimilarity Data. The similarity label is +1 when the examples in the pair belong to the same class, then and -1 otherwise. The last column shows the corrupted similarity labels.}
    \label{fig:my_label}
\end{figure}

To deal with this situation, we need robust models for such noisy pairwise comparisons. We quickly review how noisy label setting is dealt with in standard classification problems and borrow some ideas to build robust models for pairwise data.

Label noise in the standard classification problem has been well addressed. \citet{DBLP:journals/tcyb/ManwaniS13} characterized robustness under label noise and shown that convex losses do not lead to robust risk minimization. For binary classification, \citet{10.1016/j.neucom.2014.09.081} show that if a loss function satisfies the condition $l(f(.),1)+l(f(.),-1)=K$ for some finite $K$, then risk minimization becomes robust to label noise under that loss. Loss functions like ramp loss, sigmoid loss, and probit loss are robust against label noise as they satisfy the property mentioned above \citep{10.1016/j.neucom.2014.09.081}.
On the other hand, \citet{NIPS2013_3871bd64} proposes a loss correction-based approach to learning with noisy labels. In this approach, given a loss function, a new loss function is derived such that risk minimization under the new loss is robust to label noise. This approach \citep{NIPS2013_3871bd64} requires noise rate as an input. \citet{Patrini2017MakingDN} proposed a data-dependent approach that can efficiently estimate the noise rates. 

There is not much work in learning binary classifiers using noisy pairwise similar-dissimilar data. The existing approach in this setting \citet{10.1007/978-3-030-86520-7_15} extends the loss correction based approach proposed in \citep{NIPS2013_3871bd64}. However, the applicability of their approach \citep{10.1007/978-3-030-86520-7_15} is limited for the following reasons. (a) Their approach requires noise rate as an input for different cases of SD label noise. (b) They do not provide a provably correct way to estimate the noise rates. (c) Their experimental results are limited to squared error loss only. 

This paper proposes a new approach for learning robust classifiers using noisy pairwise similar dissimilar data. The proposed approach does not change the loss function at all. Instead, it is based on the principle of robust loss functions. We propose the conditions under which risk minimization becomes robust to the SD label noise pairwise similar dissimilar data. We use the acronym {\it RoLNiP} ({\bf Ro}bust {\bf L}earning using {\bf N}o{\bf i}sy {\bf P}airwise comparisons) for our approach. We make the following contributions to this work.
\subsubsection*{Our Contributions:}
\begin{enumerate}
    \item We derive sufficiency conditions under which the risk minimization framework becomes robust to SD label noise in similar dissimilar data. 
    \begin{itemize}
        \item {\bf Uniform SD Label Noise:} In this case, if the loss function satisfies the condition $l(f(.),1)+l(f(.),-1)=K, \forall f\in \mathcal{F}$ for some finite $K$, then we show that the risk minimization becomes robust as long as the uniform noise rate is less than 0.5. RoLNiP does not require noise rate as an input for the uniform noise case. In contrast, the loss correction-based approach \citep{DBLP:journals/neco/ShimadaBSS21} requires the noise rate to learn a robust classifier.
        \item {\bf SD Conditional SD Label Noise: } In this case, if the loss function satisfies the condition $l(f(.),1)+l(f(.),-1)=K, \forall f\in \mathcal{F}$ for some finite $K$, then we can scale the loss function $(l(.,.)$ such that the scaled loss function becomes robust to the label noise. It also requires that the sum of noise rates should be less than 1. Note that our approach also needs noise rate as input for this case.
    \end{itemize} 
    \item We propose a provably correct approach for learning the noise rates for SD conditional SD label noise. We show that this approach efficiently learns the noise rates for different settings.
        \item We experimentally show the robustness of RoLNiP with different noise rates and with different datasets. 
\end{enumerate}

\section{Learning with Pairwise Similarity/Dissimilarity Data}
Here, we are interested in learning only binary classifiers using pairwise similarity/dissimilarity (SD) data. Let $\X\subseteq \R^d$ be the instance space from which the feature vectors are generated. Let $\Y=\{+1,-1\}$ be the label space. Let $S = \{(\x_1, \x_1', \tau_1), (\x_2, \x_2', \tau_2), \ldots,(\x_N, \x'_N, \tau_N)\}$ be the noise-free SD training data, where $(\x_i,\x_i')\in \X\times \X,\forall i\in [N]$. $\tau_1,\ldots,\tau_N$ are the SD labels. $\tau_i=1$ if $y_i=y_i'$ ($\x_i$ and $\x_i'$ belong to the same class). Similarly, $\tau_i=-1$ if $y_i\neq y_i'$ ($\x_i$ and $\x_i'$ belong to different classes). Let $\pi_+=p(y=+1)$ and $\pi_-=p(y=-1)$ be the class prior probabilities such that $\pi_++\pi_-=1$. Let $\pi_S=p(\tau=1)$ and $\pi_D=p(\tau=-1)$.  

\subsection{Risk Minimization Framework for Learning with SD Labels} We use deep neural networks to learn the classifiers. Let $\mathcal{F}$ be the class of real-valued classifier functions represented by deep neural networks. Any $f\in \mathcal{F}$ ($f:\mathcal{X}\rightarrow \mathbb{R}$) denotes a real-valued binary classifier corresponding to a particular network. To capture the difference between the value predicted by the network and the actual label, we use the loss function $l:\mathbb{R}\times \mathcal{Y}\rightarrow \mathbb{R}_+$. The standard risk minimization framework finds the classifier by minimizing the expected value of loss $l(.,.)$. 

In the SD label setting,  
\citep{DBLP:journals/neco/ShimadaBSS21} propose a risk minimization framework for learning binary classifiers with SD risk. The risk of a classifier $f(.)$ can be rewritten in terms of SD labels as follows \citep{DBLP:journals/neco/ShimadaBSS21}.
\begin{align}
\label{eq:noise-free-sd-risk}
    R_{L}(f) = \pi_S \E_{ p_{S}(\x, \x')} \bigg[\frac{L(f(\x), 1) + L(f(\x'), 1)}{2}\bigg] + \pi_D \E_{ p_{D}(\x, \x')} \bigg[\frac{L(f(\x), -1) + L(f(\x'), -1)}{2}\bigg]
\end{align}
where $L(z, t) = \frac{\pi_+}{\pi_+ - \pi_-} l(z, t) - \frac{\pi_-}{\pi_+ - \pi_-} l(z, -t)$. Here, $p_S(\x,\x')=p(\x,\x'|y=y')=p(\x,\x'|\tau=1)$ and $p_D(\x,\x')=p(\x,\x'|y\neq y')=p(\x,\x'|\tau=-1)$.

\subsection{SD Label Noise} 
Here, we introduce SD label noise which affects SD label $\tau$. In the presence of SD label noise, instead of $\tau$, we observe the noisy SD label $\Tilde{\tau}$. $\Tilde{\tau}$ can be either same as $\tau$ or $-\tau$ (depending on the underlying noise process, which could be deterministic or stochastic). One simplest stochastic SD label noise model is as follows. The noise corrupted SD label $ \Tilde {\tau}_i$ is same as noise-free SD label $\tau_i$ with probability $1-\eta_i$. $ \Tilde{\tau}_i = -\tau_i $ with probability $\eta_i$. In this paper, we treat two kinds of stochastic SD label noise.
\begin{enumerate}
    \item {\bf Uniform SD Label Noise:} In this case, $P(\Tilde{\tau}_i=\tau_i)=1-\eta,\;\forall i\in[N]$. Thus, $\eta_i=\eta,\;\forall i\in [N]$. Thus for every example pair $(\x_i,\x_i')$, noise affects $\tau_i$ with the same rate $\eta$. 
    \item {\bf SD Conditional SD Label Noise:} In this case, the noise rate depends on the value of the noise-free SD label. Thus, there can be two cases here. (a) In the first case, we assume that $\tau=1$. Thus, the probability that the noisy SD label $\Tilde{\tau}$ takes value -1 is $\eta_S$ (i.e., $P(\Tilde{\tau}=-1|\tau=1)=\eta_S$). (b) In the second case, we assume that $\tau=-1$. Thus, the probability that the noisy SD label $\Tilde{\tau}$ takes value 1 is $\eta_D$ (i.e., $P(\Tilde{\tau}=1|\tau=-1)=\eta_D$).
\end{enumerate}

\section{Robust Learning in Presence of SD Label Noise} Consider the noisy pairwise SD training data as $ S^{\eta}_{SD} = \{(\x_1, \x_1', \Tilde{\tau}_1), \ldots,(\x_N, \x'_N, \Tilde{\tau}_N)\}$, where $ \Tilde {\tau}_i$ is the noise corrupted SD label for example pair $(\x_i,\x_i')$. This paper aims to learn the true underlying classifier using the noisy pairwise similarity data.  

\subsection{How Is It Different from Learning With Label Noise?}
Note that the setting discussed here is different from learning with label noise. Learning actual similarity functions using noisy pairwise similarity labels is a problem analogous to learning with noisy labels.

However, we are trying to learn the underlying classifier using the noisy pairwise similarity labels, which is a more complex problem than learning the actual similarity function.

\subsection{Robustness Criteria}
Here, the noise affects the pairwise similarity labels. But, it eventually can cause label noise.\footnote{For example, $\x_i$ and $\x_j$ are two feature vectors such that $\x_i$ belong to the positive class. We don't know about the class of $\x_j$. But, we know the noisy pairwise similarity between $\x_i$ and $\x_j$, that is, $\Tilde{\tau}=-1$. If $\Tilde{\tau}$ is the same as the actual similarity, then the true class of $\x_j$ is negative. On the other hand, if $\Tilde{\tau}$ is the opposite of the actual similarity, then the true similarity is +1, and the actual class of $\x_j$ is positive. Thus, we see that the noise in the pairwise similarity also induces noise in the labels.} Therefore, achieving robustness in predicting pairwise similarity does not imply that the model is robust to the label noise (which is caused by noisy pairwise similarity labels). Thus, a better robustness goal is to ensure that the noise in pairwise similarity labels does not affect label predictions. Which means, classifier learning should be robust to noise in pairwise similarity labels.

Therefore, we use the noise tolerance definition proposed in \citep{DBLP:journals/tcyb/ManwaniS13} to ensure the robustness in label predictions. Let $f^*$ be the classifier learnt using an algorithm $\mathcal{A}$ with noise-free SD training data. Also, let $f_{\eta}^*$ be the classifier learnt using algorithm $\mathcal{A} $ with noisy SD training data. We say that the algorithm $\mathcal{A}$ is robust to SD label noise if
both $f^*$ and $f_{\eta}^*$ achieve the same accuracy on noise-free data as follows. In other words,
\begin{equation}
    P_{\mathcal{X}\times \mathcal{Y}}(f^*(\x)=y)=P_{\X\times \Y}(f^*_{\eta}(\x)=y).
\end{equation}

\subsection{Robustness of Risk Minimization With SD Label Noise} 
Here, we establish conditions under which risk minimization becomes robust in the presence of SD label noise. Here the probability is computed with respect to the joint distribution on $\X\times \Y$.
\begin{theorem}
Let $ \eta < 0.5 $ and let the loss function $ l $ satisfy $ l(f(\x), 1) + l(f(\x), -1) = K,\;\forall \x,\;\forall f $ for some positive constant K. Then SD risk minimization becomes noise tolerant to uniform noise.
\end{theorem}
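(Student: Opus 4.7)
The plan is to show that the noisy SD risk $R_L^\eta(f)$ is an affine function of the clean SD risk $R_L(f)$ with a strictly positive slope, so that both risks share the same minimizer over $\mathcal{F}$, which directly gives the noise-tolerance definition from \citep{DBLP:journals/tcyb/ManwaniS13}.

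First, I would write the noisy risk in the same form as \eqref{eq:noise-free-sd-risk} but with the noise-corrupted marginal $\pi_S^\eta,\pi_D^\eta$ and the noise-corrupted conditional densities $p_S^\eta,p_D^\eta$. Using the uniform noise model ($\tilde\tau=\tau$ with probability $1-\eta$ and $\tilde\tau=-\tau$ with probability $\eta$), I would use the identities $\pi_S^\eta p_S^\eta(\x,\x') = (1-\eta)\pi_S p_S(\x,\x') + \eta\pi_D p_D(\x,\x')$ and $\pi_D^\eta p_D^\eta(\x,\x') = (1-\eta)\pi_D p_D(\x,\x') + \eta\pi_S p_S(\x,\x')$ to rewrite the noisy risk as an integral against the clean densities. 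Collecting the $(1-\eta)$ and $\eta$ terms gives
\begin{equation*}
R_L^\eta(f) = (1-\eta)\,R_L(f) + \eta\,\bar R_L(f),
\end{equation*}
where $\bar R_L(f)$ is the ``swapped'' risk obtained from \eqref{eq:noise-free-sd-risk} by interchanging the labels $1$ and $-1$ inside the two expectations (while keeping $\pi_S,\pi_D$ attached to $p_S,p_D$).

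Next, I would lift the hypothesis $l(f(\x),1)+l(f(\x),-1)=K$ from the original loss $l$ to the induced loss $L(z,t)=\tfrac{\pi_+}{\pi_+-\pi_-}l(z,t)-\tfrac{\pi_-}{\pi_+-\pi_-}l(z,-t)$. A direct two-line calculation shows $L(z,1)+L(z,-1)=K$ as well, because the coefficients of $l(z,1)+l(z,-1)$ collapse to $(\pi_+-\pi_-)/(\pi_+-\pi_-)=1$. Adding the expressions for $R_L(f)$ and $\bar R_L(f)$, every integrand becomes $\tfrac{1}{2}(L(f(\x),1)+L(f(\x),-1))+\tfrac{1}{2}(L(f(\x'),1)+L(f(\x'),-1))=K$, and integrating against $\pi_S p_S+\pi_D p_D$ (which integrates to $1$) yields the classifier-independent identity $R_L(f)+\bar R_L(f)=K$.

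Substituting $\bar R_L(f)=K-R_L(f)$ into the previous decomposition gives $R_L^\eta(f)=(1-2\eta)R_L(f)+\eta K$, so under $\eta<0.5$ the map $R_L\mapsto R_L^\eta$ is strictly increasing with a constant offset; hence $f_\eta^*\in\argmin_\mathcal{F} R_L^\eta$ coincides with $f^*\in\argmin_\mathcal{F} R_L$, which yields $P(f_\eta^*(\x)=y)=P(f^*(\x)=y)$. I expect the only mildly delicate step to be the verification that the constant-sum property passes from $l$ to $L$ and that the swapped risk integrates cleanly to $K$; the bookkeeping with the four combinations of $\{p_S,p_D\}\times\{+1,-1\}$ has to be done carefully, but no further structure on $f$ or on the network class $\mathcal{F}$ is needed.
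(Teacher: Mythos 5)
Your proposal is correct and follows essentially the same route as the paper: both hinge on the inherited symmetry $L(z,1)+L(z,-1)=l(z,1)+l(z,-1)=K$ together with the mixture structure of the noisy SD distribution to arrive at $R_{L}^{\eta}(f)=(1-2\eta)R_{L}(f)+\eta K$, from which the minimizers coincide when $\eta<0.5$. The only difference is bookkeeping: you decompose at the joint-density level via $\pi_S^{\eta}p_S^{\eta}=(1-\eta)\pi_S p_S+\eta\pi_D p_D$ and use the swapped-risk identity $R_L(f)+\bar{R}_L(f)=K$, whereas the paper expands the noisy risk into four terms carrying the noisy priors $\pi_S^{\eta},\pi_D^{\eta}$ and then invokes $\frac{(1-\eta)\pi_S^{\eta}-\eta\pi_D^{\eta}}{\pi_S}=\frac{(1-\eta)\pi_D^{\eta}-\eta\pi_S^{\eta}}{\pi_D}=1-2\eta$.
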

\begin{proof}
SD risk $ R_{SD}$ under noise-free case is described in eq.(\ref{eq:noise-free-sd-risk}).
SD risk in presence of noise ($ R_{L}^{\eta}(f) $) is described as follows.

\begin{equation}
    \begin{split}
        &R_{L}^{\eta}(f)  = \pi_S^{\eta} \left( \eta \E_{ p_{D}} \left[ \frac{L (f(\x), 1) + L (f(\x'), 1)}{2} \right]  + (1-\eta) \E_{ p_{S}} \left[\frac{L(f(\x), 1) + L(f(\x'), 1)}{2} \right] \right)\\
        & + \pi_D^{\eta} \left( \eta \E_{ p_{S} } \left[ \frac{L (f(\x), -1) + L (f(\x'), -1)}{2} \right]  + (1-\eta) \E_{ p_{D}} \left[\frac{L(f(\x), -1) + L(f(\x'), -1)}{2} \right] \right)
    \end{split}
\end{equation}
Let $g(t) = \frac{L (f(\x), t) + L (f(\x'), t)}{2}$. Then,
\begin{equation} \label{RSDeta}
    \begin{split}
        R_{L}^{\eta}(f) & = 
         \pi_S^{\eta} \left( \eta \E_{p_{D}} [g(1) ] + (1-\eta) \E_{p_{S}} [g(1) ] \right) + \pi_D^{\eta} \left( \eta \E_{p_{S}} [ g(-1) ] + (1-\eta) \E_{p_{D}} [g(-1) ] \right)\\
         &=\eta \pi_D^{\eta} \E_{p_{S}} [ g(-1) ] + (1-\eta) \pi_S^{\eta} \E_{p_{S}} [ g(1) ] + \eta \pi_S^{\eta} \E_{p_{D}} [g(1)] + (1-\eta) \pi_D^{\eta} \E_{p_{D}} [ g(-1) ]
    \end{split}
\end{equation}
Using the assumption $l(f(\x), 1) + l(f(\x), -1) = K$, we see that 
\begin{equation*}
    \begin{split}
        &L(f(\x), 1) + L(f(\x), -1)=  \frac{\pi_+l(f(\x), 1)}{\pi_+ - \pi_-}  - \frac{\pi_-l(f(\x), -1)}{\pi_+ - \pi_-}  +  \frac{\pi_+l(f(\x), -1)}{\pi_+ - \pi_-}  - \frac{\pi_-l(f(\x), 1)}{\pi_+ - \pi_-}  \\
        & \quad \quad\quad \quad = \bigg( \frac{\pi_+ - \pi_-}{\pi_+ - \pi_-} \bigg) l(f(\x), 1) + \bigg( \frac{\pi_+ - \pi_-}{\pi_+ - \pi_-} \bigg) l(f(\x), -1) = l(f(\x), 1) + l(f(\x), -1)=K.
    \end{split}
\end{equation*}
Thus, $g(1) + g(-1)  =   \frac{L(f(\x), 1) + L(f(\x'), 1)}{2}  +  \frac{2K -  L(f(\x), 1) - L(f(\x'), 1)}{2}  = K$. Using this, we get,
\begin{equation*} \label{interRSD}
    \begin{split}
        R_{L}^{\eta}(f) 
        & = \eta \pi_D^{\eta} \E_{p_{S}} [ K - g(1) ] + (1-\eta) \pi_S^{\eta} \E_{p_{S}} [g(1)]
        + \eta \pi_S^{\eta} \E_{p_{D}} [K - g(-1) ] + (1-\eta) \pi_D^{\eta} \E_{p_{D}} [g(-1)]\\
        & = \eta \pi_D^{\eta} K + \eta \pi_S^{\eta} K + (1-\eta) \pi_S^{\eta} \E_{p_{S}} [g(1)]- \eta \pi_D^{\eta} \E_{p_{S}} [g(1)]
         + (1-\eta) \pi_D^{\eta} \E_{p_{D} } [ g(-1)]\\
         & \quad- \eta \pi_S^{\eta} \E_{p_{D}} [g(-1)]\\
        & = \eta \pi_D^{\eta} K + \eta \pi_S^{\eta} K + [(1-\eta) \pi_S^{\eta} - \eta \pi_D^{\eta}] \E_{ p_{S}} [g(1) ] + [(1-\eta) \pi_D^{\eta} - \eta \pi_S^{\eta}] \E_{p_{D}} \big{[} g(-1) \big{]} \\
        & = \eta K + \bigg{[} \frac{(1-\eta) \pi_S^{\eta} - \eta \pi_D^{\eta}}{\pi_S} \bigg{]} \pi_S \E_{p_{S}} \big{[} g(1) \big{]} + \bigg{[} \frac{(1-\eta) \pi_D^{\eta} - \eta \pi_S^{\eta}}{\pi_D} \bigg{]} \pi_D \E_{p_{D}} \big{[} g(-1) \big{]} 
    \end{split}
\end{equation*}

We can easily see that $\frac{(1-\eta) \pi_S^{\eta} - \eta \pi_D^{\eta}}{\pi_S}= \frac{(1-\eta) \pi_D^{\eta} - \eta \pi_S^{\eta}}{\pi_D}=1-2\eta$. 
Using this in eq.(\ref{interRSD}), we get $
        R_{L}^{\eta}(f)   = \eta K + \big{[} 1 - 2\eta \big{]} \pi_S \E_{p_{S}} \big{[} g(1) \big{]} + \big{[} 1 - 2\eta \big{]} \pi_D \E_{p_{D}} \big{[} g(-1) \big{]}
        \
         = \eta K + \big{[} 1-2\eta \big{]} R_{SD}(f)$.
Let $f^*$ be the minimizer of SD risk $R_{L}(f)$, then $R_{L}(f^*)\leq R_{L}(f),\;\forall f\in \mathcal{F}$. But we also know that $\eta < 0.5$. Thus, we see that $\forall f \in \mathcal{F}$, $R_{L}^{\eta}(f^{*}) - R_{L}^{\eta}(f)  = (1 - 2\eta ) \big{[} R_{L}(f^{*}) - R_{L}(f) \big{]}\leq 0$. Therefore, $f^*$ remains minimizer of the noisy SD risk $ R_{L}^{\eta} (f) $ also. Thus, risk minimization is robust to uniform SD label noise.
\end{proof}
Above theorem gives a generic recipe to make the risk minimization robust to the uniform SD label noise by choosing a loss functions $l(.,.)$ that satisfies the condition $l(f(\x),1)+l(f(\x),-1)=K,\;\forall \x,\;\forall f$ for some positive constant $K$. Note that this method works irrespective of the noise rate as long as it is less than $0.5$. It is also worth noting that this approach does not require knowing the noise rate, unlike the loss correction based approach \citep{10.1007/978-3-030-86520-7_15}.

\section{Robust Learning in the presence of SD Conditional Noise}
Here, we develop an approach to learning robust classifiers in the presence of SD conditional noise. Unlike uniform noise, this case needs a different approach to achieve robustness, as described in the following theorem. 
\begin{theorem}
Let $\eta_S=P(\Tilde{\tau}=-1|\tau=1)$ and $\eta_D=P(\Tilde{\tau}=1|\tau=-1)$ such that $\eta_S+\eta_D<1$. Let the loss function $ l $ be such that $ l(f(\x), 1) + l(f(\x), -1) = K,\;\forall \x,\;\forall f $ for some positive constant $K$. Let $L(f(\x), t) = \frac{\pi_+}{\pi_+ - \pi_-} l(f(\x), t) - \frac{\pi_-}{\pi_+ - \pi_-} l(f(\x), -t)$. SD risk $ R_{L}(f) $ under noise free case as described in eq.(\ref{eq:noise-free-sd-risk}).
Let loss function $\hat{L}$ be such that $\hat{L}(f(\x),1)=L(f(\x),1)$ and $\hat{L}(f(\x),-1)=CL(f(\x),-1)$ where $C= \frac{\pi_S^{\eta} \big[ (1-\eta_S)\pi_D + \eta_D\pi_S \big]}{\pi_D^{\eta} \big[ \eta_S \pi_D + (1-\eta_D)\pi_S \big] }$. Let $R_{\hat{L}}^{\eta}$ denotes the risk with loss function $\hat{L}$ under SD conditional noise. Then, the minimizer of the noisy risk $R_{\hat{L}}^{\eta}$ is the same as the minimizer of noise-free risk $R_{{L}}$.  
\end{theorem}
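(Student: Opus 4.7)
The plan is to follow the template established in the proof of Theorem 1, but to book-keep the two noise rates $\eta_S$ and $\eta_D$ separately and to use the freedom in choosing $C$ to neutralize the asymmetry this introduces. Concretely, I will write down the noisy risk $R_{\hat{L}}^{\eta}(f)$ as a weighted sum over the two observed SD labels (exactly as in the displayed equation from Theorem 1's proof, but with $\eta_S$ replacing $\eta$ on the similar branch, $\eta_D$ on the dissimilar branch, and with the factor $C$ multiplying the two dissimilar-loss terms because $\hat{L}(f(\x),-1)=C L(f(\x),-1)$). Setting $g(t)=(L(f(\x),t)+L(f(\x'),t))/2$ as before, this expands into four terms of the form $\E_{p_S}[g(\pm 1)]$ and $\E_{p_D}[g(\pm 1)]$, which I then reduce to two using the pointwise identity $g(1)+g(-1)=K$ already established in the proof of Theorem 1 from the loss-sum condition on $l$.

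After this substitution, the noisy risk collapses to
$R_{\hat{L}}^{\eta}(f) = \text{const} + [\pi_S^{\eta}(1-\eta_S) - C\pi_D^{\eta}\eta_S]\,\E_{p_S}[g(1)] + [C\pi_D^{\eta}(1-\eta_D) - \pi_S^{\eta}\eta_D]\,\E_{p_D}[g(-1)]$,
where the constant absorbs all $f$-independent terms. To match this against $R_{L}(f)=\pi_S \E_{p_S}[g(1)] + \pi_D \E_{p_D}[g(-1)]$, I need a single $\lambda>0$ for which the first bracket equals $\lambda\pi_S$ and the second equals $\lambda\pi_D$. Equating the two expressions for $\lambda$ and cross-multiplying yields
$\pi_S^{\eta}[(1-\eta_S)\pi_D + \eta_D \pi_S] = C\pi_D^{\eta}[\eta_S \pi_D + (1-\eta_D)\pi_S]$,
which is exactly the definition of $C$ in the statement. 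So this value of $C$ is the unique choice that renders $R_{\hat{L}}^{\eta}$ affine in $R_{L}$.

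The main obstacle is then to confirm that the common multiplier $\lambda$ is strictly positive, since otherwise the minimizer of the noisy risk could flip to a maximizer of the clean risk. Substituting the prescribed $C$ back into $\lambda=[\pi_S^{\eta}(1-\eta_S) - C\pi_D^{\eta}\eta_S]/\pi_S$ and simplifying, I expect the $\eta_S\eta_D$ cross terms to cancel and leave $\lambda=\pi_S^{\eta}(1-\eta_S-\eta_D)/[\eta_S\pi_D + (1-\eta_D)\pi_S]$. The hypothesis $\eta_S+\eta_D<1$ is precisely what makes this quantity positive. Once $\lambda>0$ is established, the identity $R_{\hat{L}}^{\eta}(f) = \lambda R_{L}(f) + \text{const}$ forces any minimizer of $R_{L}$ to also minimize $R_{\hat{L}}^{\eta}$ (and conversely), which is the conclusion of the theorem.
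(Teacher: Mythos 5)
Your proposal is correct and follows essentially the same route as the paper's own proof: expand the noisy risk under the scaled loss $\hat{L}$, use the identity $g(1)+g(-1)=K$ to collapse it into an affine function of $R_{L}$, verify that the prescribed $C$ equalizes the two coefficients, and use $\eta_S+\eta_D<1$ to conclude the common multiplier $\pi_S^{\eta}(1-\eta_S-\eta_D)/[\eta_S\pi_D+(1-\eta_D)\pi_S]$ is positive so minimizers coincide. The only (harmless) addition is your remark that this $C$ is the unique choice making the two risks affinely related.
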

\begin{proof}
The risk with loss function $\hat{L}$ under SD conditional noise is as follows. 
\begin{equation*}
    \begin{split}
        &R_{\hat{L}}^{\eta}(f)  = \pi_S^{\eta} \left[ \eta_D \E_{p_{D}} \left[ \frac{\hat{L} (f(\x), 1) + \hat{L} (f(\x'), 1)}{2} \right] 
         + (1-\eta_S) \E_{p_{S}} \left[\frac{\hat{L}(f(\x), 1) + \hat{L}(f(\x'), 1)}{2} \right] \right]\\
        &+ \pi_D^{\eta} \left[ \eta_S \E_{p_{S}} \left[ \frac{\hat{L} (f(\x), -1) + \hat{L} (f(\x'), -1)}{2} \right]
        + (1-\eta_D) \E_{p_{D}} \left[ \frac{\hat{L}(f(\x), -1) + \hat{L}(f(\x'), -1)}{2} \right] \right]
    \end{split}
\end{equation*}
Let $g(t) =  \frac{L (f(\x), t) + L (f(\x'), t)}{2} $ and $g'(t)  = \frac{\hat{L} (f(\x), t) + \hat{L} (f(\x'), t)}{2} $. Using the fact that $\hat{L}(z, 1)  = L(z, 1)$ and $\hat{L}(z, -1)  = CL(z, -1)$, we see that $g'(1)  = \frac{\hat{L}(f(\x), 1) + \hat{L}(f(\x'), 1)}{2} = \frac{L(f(\x), 1) + L(f(\x'), 1)}{2} = g(1)$ and $g'(-1)  = \frac{\hat{L}(f(\x), -1) + \hat{L}(f(\x'), -1)}{2} = \frac{CL(f(\x), 1) + CL(f(\x'), 1)}{2} = Cg(-1)$. Using this, we get the following.

\begin{equation*}
    \begin{split}
        R_{\hat{L}}^{\eta}(f)  &= \eta_S \pi_D^{\eta} \E_{p_{S} } \big{[} g'(-1) \big{]} + (1-\eta_S) \pi_S^{\eta} \E_{ p_{S} } \big{[} g'(1) \big{]}
        + \eta_D \pi_S^{\eta} \E_{ p_{D} } \big{[} g'(1) \big{]} + (1-\eta_D) \pi_D^{\eta} \E_{ p_{D} } \big{[} g'(-1) \big{]}\\
        \
        & = \eta_S \pi_D^{\eta} \E_{ p_{S} } \big{[} Cg(-1) \big{]} + (1-\eta_S) \pi_S^{\eta} \E_{ p_{S} } \big{[} g(1) \big{]} + \eta_D \pi_S^{\eta} \E_{ p_{D} } \big{[} g(1) \big{]} + (1-\eta_D) \pi_D^{\eta} \E_{ p_{D} } \big{[} Cg(-1) \big{]}
        \end{split}
        \end{equation*}
 Using $l(f(\x), 1) + l(f(\x), -1) = K$, we notice that $L(f(\x), 1) + L(f(\x), -1) =K$. Thus, $g(1) + g(-1)  =  \frac{L(f(\x), 1) + L(f(\x'), 1)}{2}  +  \frac{L(f(\x), -1) + L(f(\x'), -1)}{2} =  \frac{L(f(\x), 1) + L(f(\x'), 1)}{2} +  \frac{2K - [ L(f(\x), 1) + L(f(\x'), 1)] }{2}  = K$. Using this, we get the following.
        \begin{equation*}
            \begin{split}
    &R_{\hat{L}}^{\eta}(f) = \eta_S \pi_D^{\eta} C \E_{ p_{S} } \big{[} K - g(1) \big{]} + (1-\eta_S) \pi_S^{\eta} \E_{ p_{S} } \big{[} g(1) \big{]}\\
        &\ \ \ + \eta_D \pi_S^{\eta} \E_{ p_{D} } \big{[} K - g(-1) \big{]} + (1-\eta_D) \pi_D^{\eta} C \E_{ p_{D} } \big{[} g(-1) \big{]}\\
        & = \eta_S \pi_D^{\eta} K C + \eta_D \pi_S^{\eta} K + \big[ (1-\eta_S) \pi_S^{\eta} - \eta_S \pi_D^{\eta} C \big] \E_{ p_{S} } \big{[} g(1) \big{]} + \big[ (1-\eta_D) \pi_D^{\eta} C - \eta_D \pi_S^{\eta} \big] \E_{ p_{D} } \big{[} g(-1) \big{]}\\
        \
        & = \eta_S \pi_D^{\eta} K C + \eta_D \pi_S^{\eta} K + \frac{(1-\eta_S) \pi_S^{\eta} - \eta_S \pi_D^{\eta} C}{\pi_S}\;\pi_S \E_{ p_{S} } \big{[} g(1) \big{]} + \frac{(1-\eta_D) \pi_D^{\eta} C - \eta_D \pi_S^{\eta}}{\pi_D}\;\pi_D \E_{ p_{D} } \big{[} g(-1) \big{]}\\
    \end{split}
\end{equation*}
Using $C=\frac{\pi_S^{\eta} \big[ (1-\eta_S)\pi_D + \eta_D\pi_S \big]}{\pi_D^{\eta} \big[ \eta_S \pi_D + (1-\eta_D)\pi_S \big] }$, we see that
$\frac{(1-\eta_S) \pi_S^{\eta} - \eta_S \pi_D^{\eta} C}{\pi_S} - \frac{(1-\eta_D) \pi_D^{\eta} C - \eta_D \pi_S^{\eta}}{\pi_D}=
        \frac{(1-\eta_S) \pi_S^{\eta}}{\pi_S} - \frac{\eta_S \pi_D^{\eta} C}{\pi_S} - \frac{(1-\eta_D) \pi_D^{\eta} C}{\pi_D} + \frac{\eta_D \pi_S^{\eta}}{\pi_D}=
        \frac{(1-\eta_S) \pi_S^{\eta}}{\pi_S} + \frac{\eta_D \pi_S^{\eta}}{\pi_D} - \frac{\eta_S \pi_D^{\eta} C}{\pi_S} - \frac{(1-\eta_D) \pi_D^{\eta} C}{\pi_D}=
        \frac{(1-\eta_S) \pi_S^{\eta}}{\pi_S} + \frac{\eta_D \pi_S^{\eta}}{\pi_D} - C \bigg[ \frac{\eta_S \pi_D^{\eta}}{\pi_S} + \frac{(1-\eta_D) \pi_D^{\eta}}{\pi_D} \bigg]=
        \pi_S^{\eta} \bigg[ \frac{(1-\eta_S)}{\pi_S} + \frac{\eta_D}{\pi_D} \bigg] - C \pi_D^{\eta} \bigg[ \frac{\eta_S }{\pi_S} + \frac{(1-\eta_D)}{\pi_D} \bigg]=
        \pi_S^{\eta} \bigg[ \frac{(1-\eta_S)\pi_D + \eta_D\pi_S}{\pi_S\pi_D} \bigg] -C \pi_D^{\eta} \bigg[ \frac{\eta_S \pi_D + (1-\eta_D)\pi_S}{\pi_S \pi_D} \bigg] =0$. This means, $\frac{(1-\eta_S) \pi_S^{\eta} - \eta_S \pi_D^{\eta} C}{\pi_S} = \frac{(1-\eta_D) \pi_D^{\eta} C - \eta_D \pi_S^{\eta}}{\pi_D}$
Using this, we get
\begin{equation*}
    \begin{split}
        &R_{\hat{L}}^{\eta}(f)  = \eta_S \pi_D^{\eta} K C + \eta_D \pi_S^{\eta} K  + \frac{(1-\eta_S) \pi_S^{\eta} - \eta_S \pi_D^{\eta} C}{\pi_S}  \bigg[ \pi_S \E_{ p_{S} } \big{[} g(1) \big{]} + \pi_D \E_{ p_{D} } \big{[} g(-1) \big{]} \bigg]\\
        \
        & = \eta_S \pi_D^{\eta} K C + \eta_D \pi_S^{\eta} C + \frac{(1-\eta_S) \pi_S^{\eta} - \eta_S \pi_D^{\eta} C}{\pi_S}R_{SD}(f)\\
    \end{split}
\end{equation*}
We see that 
\begin{align*}
    \frac{\pi_S^{\eta} \big( 1-\eta_S \big) - \pi_D^{\eta} \eta_S C}{\pi_S}& =\pi_S^{\eta} \frac{( 1-\eta_S) [\eta_S \pi_D + (1-\eta_D)\pi_S] - \eta_S [ (1-\eta_S)\pi_D + \eta_D\pi_S]}{\pi_s(\eta_S\pi_D+(1-\eta_D)\pi_S)}\\
    &=
        \pi_S^{\eta}\frac{\eta_S \big( 1 - \eta_S \big) \pi_D + \big( 1 - \eta_S \big) (1-\eta_D)\pi_S - \eta_S \big( 1 - \eta_S \big) \pi_D - \eta_S \eta_D \pi_S}{\pi_S(\eta_S\pi_D+(1-\eta_D)\pi_S)}\\
        &=\pi_S^{\eta}\frac{(1-\eta_S)(1-\eta_D)\pi_S -\eta_S \eta_D \pi_S}{\pi_S(\eta_S\pi_D+(1-\eta_D)\pi_S)}
        =\pi_S^{\eta}\frac{1-\eta_S-\eta_D}{\eta_S\pi_D+(1-\eta_D)\pi_S}.
\end{align*}
Using the fact that $\eta_S + \eta_D <1$, we
observe that $\frac{\pi_S^{\eta} \big( 1-\eta_S \big) - \pi_D^{\eta} \eta_S C}{\pi_S}>0$. Let $f^*$ be the minimizer of the SD risk $R_{L}$. Which means, $R_{L}(f^*)\leq R_{L}(f),\;\forall f\in \mathcal{F}$. Thus, we see that $\forall f \in \mathcal{F}$
\begin{equation*}
    \begin{split}
        &R_{\hat{L}}^{\eta}(f^{*}) - R_{\hat{L}}^{\eta}(f) = \frac{\pi_S^{\eta} \big( 1-\eta_S \big) - \pi_D^{\eta} \eta_S K}{\pi_S} \bigg{[} R_{L}(f^{*}) - R_{L}(f) \bigg{]}\leq 0.
    \end{split}
\end{equation*}
This shows that $f^*$ also minimized the risk $R_{\hat{L}}$.
\end{proof}
Thus, to achieve robustness in the case of SD conditional SD noise, we need to construct another loss function $\hat{L}$ using loss function $L$ as described in the above theorem. Then, it is shown that a minimizer of the noise-free risk under $L$ also minimizes risk under SD conditional SD label noise using loss $\hat{L}$. Thus, in this case, a robust algorithm has to minimize risk under SD conditional SD label noise using loss $\hat{L}$. This approach also requires that the loss function $l(.,.)$ should satisfy the condition $l(f(\x),1)+l(f(\x),-1)=K,\;\forall \x,\;\forall f$ for some positive constant $K$. Here, $\hat{L}$ requires that we know the noise rates $\eta_S$ and $\eta_D$. Here, we need to estimate the noise rates $\eta_S$ and $\eta_D$.

\subsection{Noise Rate Estimation}
We see that noise rate estimation is not required in the case of uniform SD label noise. But, for the case of SD conditional SD label noise, the proposed approach requires the knowledge of noise rates. This makes noise rate estimation a crucial part of the proposed approach. Below we propose a noise rate estimation approach for the SD conditional noise case. The proposed approach for noise rate estimation is based on the method discussed in \citet{Patrini2017MakingDN}. We modify it for our application. The following theorem is the basis of our noise rate estimation approach. 
\begin{theorem}
Assume a "perfect examples pair" exists for both $S$ and $D$. Which means,
$\exists\ (\bar{\mathbf{x}}_1^{S}, \bar{\mathbf{x}}_2^{S})\in \X\times \X$ such that $p(\bar{\mathbf{x}}_1^{S}, \bar{\mathbf{x}}_2^{S}) > 0$ and   $p(\tau=1 | \bar{\mathbf{x}}_1^{S}, \bar{\mathbf{x}}_2^{S}) = 1 $. Similarly, $\exists\ (\bar{\mathbf{x}}_1^{D}, \bar{\mathbf{x}}_2^{D})\in \X\times \X$ such that $p(\bar{\mathbf{x}}_1^{D}, \bar{\mathbf{x}}_2^{D}) > 0$ and   $p(\tau=-1 | \bar{\mathbf{x}}_1^{D}, \bar{\mathbf{x}}_2^{D}) = 1 $.
Then, it follows that,
$\eta_S=p(\Tilde{\tau}=-1|\bar{\mathbf{x}}_1^S, \bar{\mathbf{x}}_2^S)$ and $\eta_D=p(\Tilde{\tau}=1|\bar{\mathbf{x}}_1^D, \bar{\mathbf{x}}_2^D)$.
\end{theorem}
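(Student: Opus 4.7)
The plan is a one-line application of the law of total probability, once the implicit structural assumption of the noise model is made explicit. The SD conditional noise definition prescribes $P(\Tilde{\tau}\mid \tau)$ as a function of $\tau$ alone; in particular the corruption is conditionally independent of the underlying feature pair given $\tau$, so $p(\Tilde{\tau}\mid \tau,\x,\x')=p(\Tilde{\tau}\mid \tau)$. The perfect-pair hypotheses then collapse the marginalisation over $\tau$ to a single surviving term.

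Concretely, for the first identity I would condition on $\tau\in\{-1,+1\}$ at the anchor pair $(\bar{\x}_1^S,\bar{\x}_2^S)$:
\begin{align*}
p(\Tilde{\tau}=-1\mid \bar{\x}_1^S,\bar{\x}_2^S)
&= \sum_{t\in\{-1,+1\}} p(\Tilde{\tau}=-1\mid \tau=t,\bar{\x}_1^S,\bar{\x}_2^S)\,p(\tau=t\mid \bar{\x}_1^S,\bar{\x}_2^S)\\
&= p(\Tilde{\tau}=-1\mid \tau=1)\cdot 1 + p(\Tilde{\tau}=-1\mid \tau=-1)\cdot 0\\
&= \eta_S,
\end{align*}
where the first equality is the law of total probability, the second drops the feature conditioning by the noise-independence assumption and substitutes the perfect-pair values $p(\tau=1\mid \bar{\x}_1^S,\bar{\x}_2^S)=1$ and $p(\tau=-1\mid \bar{\x}_1^S,\bar{\x}_2^S)=0$, and the third uses the definition $\eta_S=p(\Tilde{\tau}=-1\mid \tau=1)$. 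The identity for $\eta_D$ is obtained by the mirror argument at $(\bar{\x}_1^D,\bar{\x}_2^D)$: conditioning on $\tau$, using noise-independence, and invoking $p(\tau=-1\mid \bar{\x}_1^D,\bar{\x}_2^D)=1$ leaves only the summand $p(\Tilde{\tau}=1\mid \tau=-1)=\eta_D$.

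The proof itself is a two-line computation, so there is no real mathematical obstacle; the point worth flagging is the noise-independence assumption, which is not stated in the theorem but is inherited from the SD conditional noise model introduced earlier. The substantive difficulty is pushed downstream of the theorem, to the operational question of how to locate the anchor pairs. Following the Patrini-style recipe, one would estimate $p(\Tilde{\tau}=1\mid \x,\x')$ from the noisy training sample (for instance with a neural network) and evaluate it at pairs that minimise/maximise this posterior, treating those as proxies for $(\bar{\x}_1^S,\bar{\x}_2^S)$ and $(\bar{\x}_1^D,\bar{\x}_2^D)$. The theorem is precisely the guarantee that justifies plug-in estimation of $\eta_S,\eta_D$ by evaluating the learned noisy conditional at these anchors.
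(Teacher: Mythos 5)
Your proposal is correct and follows essentially the same route as the paper: the paper's proof is exactly this law-of-total-probability expansion over $\tau$ at each perfect pair, writing $p(\Tilde{\tau}=-1\mid\bar{\x}_1^S,\bar{\x}_2^S)=(1-\eta_D)\,p(\tau=-1\mid\cdot)+\eta_S\,p(\tau=1\mid\cdot)=\eta_S$ and the mirror identity for $\eta_D$. The only difference is that you make explicit the assumption that the noise is conditionally independent of the feature pair given $\tau$, which the paper uses implicitly when it replaces the feature-conditioned flip probabilities by $\eta_S$ and $\eta_D$.
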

\begin{proof}
For $ (\bar{\mathbf{x}}_1^S, \bar{\mathbf{x}}_2^S)$, we have $ p(\tau=-1 | \bar{\mathbf{x}}_1^S, \bar{\mathbf{x}}_2^S)=0$ and $p(\tau=1 | \bar{\mathbf{x}}_1^S, \bar{\mathbf{x}}_2^S)=1$. Thus,
\begin{equation*}
        p(\Tilde{\tau}=-1 | \bar{\mathbf{x}}_1^S, \bar{\mathbf{x}}_2^S) 
        = (1-\eta_D) \;p(\tau=-1 | \bar{\mathbf{x}}_1^S, \bar{\mathbf{x}}_2^S) +\eta_S \;p(\tau=1 | \bar{\mathbf{x}}_1^S, \bar{\mathbf{x}}_2^S) =\eta_S.
\end{equation*}
Similarly, for $ (\bar{\mathbf{x}}_1^D, \bar{\mathbf{x}}_2^D)$, we have $p(\tau=1 | \bar{\mathbf{x}}_1^D, \bar{\mathbf{x}}_2^D)=0$ and $p(\tau=-1 | \bar{\mathbf{x}}_1^D, \bar{\mathbf{x}}_2^D)=1$. Thus,
\begin{equation*}
        p(\Tilde{\tau}=1 | \bar{\mathbf{x}}_1^D, \bar{\mathbf{x}}_2^D) = \eta_D\; p(\tau=-1 | \bar{\mathbf{x}}_1^D, \bar{\mathbf{x}}_2^D) +(1-\eta_S) \;p(\tau=1 | \bar{\mathbf{x}}_1^D, \bar{\mathbf{x}}_2^D) =\eta_D.
\end{equation*}
\end{proof}
The primary assumption in the above theorem is that (a) an example pair whose actual SD label is +1 and (b) an example pair whose actual SD label is -1. 

As this noise rate estimation algorithm deals with pairwise data points, we append the features vectors of the pair of examples and make a vector of 2$d$-dimensional. Thus, we create a training set of the following form $\bar{S}_{SD}^{\eta}=\{([\x_1^T\quad \x_2^T],\Tilde{\tau})\;|\; (\x_1,\x_2,\Tilde{\tau})\in S_{SD}^{\eta}\}$. We train a model to learn the mapping $h:\mathbb{R}^{2d}\rightarrow \{-1,+1\}$ which takes $[\x_1^T\quad \x_2^T]^T $ as input and predicts SD label for example pair $(\x_1,\x_2)$ using noisy SD labels. We use cross-entropy loss to train such a network to compare the predicted and noisy SD labels. 

After training, we find the example pair $(\bar{\x}_1^{S},\bar{\x}_2^S)$ for which the network predicts the highest probability of the SD label is 1. Thus, $(\bar{\x}_1^{S},\bar{\x}_2^S)={\arg\max}_{(\x_1,\x_2)\in \X\times\X}\;p(\Tilde{\tau}=1|\x_1,\x_2)$. Using this example pair, we estimate $\hat{\eta}_S=p(\Tilde{\tau}=-1|\bar{\x}_1^{S},\bar{\x}_2^S)$. Similarly, we find the example pair $(\bar{\x}_1^{D},\bar{\x}_2^{D})$ for which the network predicts highest probability of SD label being -1. Thus, $(\bar{\x}_1^{D},\bar{\x}_2^{D})={\arg\max}_{(\x_1,\x_2)\in \X\times\X}\;p(\Tilde{\tau}=-1|\x_1,\x_2)$. 
Using this example pair, we estimate $\hat{\eta}_D=p(\Tilde{\tau}=1|\bar{\x}_1^{D},\bar{\x}_2^{D})$. We use these estimated noise rates in the experiments. Complete details of the noise rate estimation approach are given in Algorithm~\ref{algo:NRE}. 

\begin{algorithm}[H]
    \caption{Noise Rate Estimation Algorithm}
    \label{algo:NRE}
    \begin{algorithmic}
        \STATE \textbf{Input}: Noisy pairwise similarity dissimilarity training set $S_{SD}^{\eta}$
        \STATE Create $\bar{S}_{SD}^{\eta}$ as $\bar{S}_{SD}^{\eta}=\{([\x_1^T\;\;\x_2^T]^T,\Tilde{\tau})\;|\;(\x_1,\x_2,\Tilde{\tau})\in S_{SD}^{\eta}\}$.
        \STATE Partition $\bar{S}_{SD}^{\eta}$ into train ($\bar{S}_{SD(tr)}^{\eta}$) and test ($\bar{S}_{SD(ts)}^{\eta}$) sets in the ratio 80:20
        \STATE Train a network to learn a mapping $h: \R^{2d}\rightarrow \{0,1\}$ using $\bar{S}_{SD(tr)}^{\eta}$ such that $h$ models the probability $p(\Tilde{\tau}=1|\x_1,\x_2)$.
        \STATE Find $(\bar{\x}_1^S,\bar{\x}_1^S)$ using $\bar{S}_{SD(ts)}^{\eta}$ such that $(\bar{\x}_1^S,\bar{\x}_1^S)={\arg\max}_{(\x_1,\x_2)\in \bar{S}_{SD(ts)}^{\eta}}\;p(\tilde{\tau}=1|\x_1,\x_2)$.
        \STATE Find $(\bar{\x}_1^D,\bar{\x}_1^D)$ using $\bar{S}_{SD(ts)}^{\eta}$ such that $(\bar{\x}_1^D,\bar{\x}_1^D)={\arg\max}_{(\x_1,\x_2)\in \bar{S}_{SD(ts)}^{\eta}}\;p(\tilde{\tau}=-1|\x_1,\x_2)$.
        \STATE Find $\hat{\eta}_S$ as $\hat{\eta}_S=p(\Tilde{\tau}=-1|\bar{\x}_1^{S},\bar{\x}_2^S)$.
        \STATE Find $\hat{\eta}_D$ as $\hat{\eta}_D=p(\Tilde{\tau}=1|\bar{\x}_1^{D},\bar{\x}_2^D)$.
        \STATE \textbf{Output}: $\hat{\eta}_S$ and $\hat{\eta}_D$.
    \end{algorithmic}
\end{algorithm}
\section{Analysis of Robustness of Various Loss Functions}
Here we discuss the robustness properties of various loss functions (e.g., 0-1 loss, hinge loss, log loss, squared error loss, ramp loss, sigmoid loss, absolute error loss, etc.)

\begin{enumerate}
    \item We observe that 0-1 loss, ramp loss, sigmoid loss and probit loss functions satisfy the condition $l(f(\x),1)+l(f(\x),-1)=K$ with $K$ values as described in the following table.
    
    \begin{tabular}{||p{0.5in}|p{4in}|p{0.6in}||}
    \hline
    \hline
    Loss  & Functional Form & $K$ value\\
    \hline
    \hline
    0-1  & $l_{0-1}(f(\x),y)=\mathbb{I}[yf(\x)>0]$ where $y\in\{+1,-1\}$, $\mathbb{I}[A]=1$ if $A$ is true and 0 otherwise & $K=1$\\
    \hline
    Ramp  & $l_{ramp}(f(\x),y)=[1-yf(\x)]_+-[-1-yf(\x)]_+$ where $y\in\{+1,-1\}$, $[a]_+=\max(0,a)$ & $K=2$\\
    \hline
    Sigmoid  & $l_{sigmoid}(f(\x),y)=\frac{1}{1+\exp(yf(\x))}$ where $y\in\{+1,-1\}$ & $K=1$\\
    \hline 
    Probit  & $l_{probit}(f(\x),y)=1-\Phi(yf(\x))$ where $y\in\{+1,-1\}$ and $\Phi$ is the cumulative distribution function (CDF) of standard normal distribution & $K=1$\\
    \hline
    Absolute Error & $l_{abs}(f(\x),y)=\vert y-f(\x)\vert$  where $f(\x)$ is sigmoid function and $y\in\{0,1\}$ & $K=1$\\
    \hline
    \hline
    \end{tabular}
    
    Thus, risk minimization under these losses becomes robust to uniform SD label noise and SD conditional SD label noise.
    \item Hinge loss ($l_{hinge}(f(\x),y)=[1-yf(\x)]_+$), squared error loss ($l_{sq}(f(\x),y)=(f(\x)-y)^2$) and log-loss ($l_{log}(f(\x),y)=\log[1+\exp(-yf(\x)]$) do not satisfy the property $l(f(\x),1)+l(f(\x),-1)=K$. Thus, the robustness of risk minimization under these loss functions is not guaranteed in the presence of SD label noise.
\end{enumerate}
\section{Experiments}

In this section, we present experimental results to illustrate the robustness of the proposed approach in the presence of noisy pairwise similarity dis-similarity data. We compare performances of risk minimization with sigmoid loss, ramp loss, mean absolute error (MAE) loss, and mean squared error (MSE) loss with Deep Neural Network (DNN) model architecture. We show experimental results on five real-world binary classification datasets from the UCI ML repository \citep{Dua:2019}. These datasets are Adult, Breast Cancer, Ionosphere, Phishing, and W8a. To show that the proposed approach works efficiently for large-scale image datasets, we show experiments with the CIFAR-10 dataset \citep{cifar-10}. In the CIFAR-10 dataset, we considered two classes (airplane vs. automobile) for the experiments.
\subsection{Creating Noise-free and Noisy Pairwise Similarity Dissimilarity Datasets} 
We randomly choose pair of examples from the original dataset to obtain pairwise data and assign the SD label based on whether they belong to the same class. We call this noise-free pairwise data $S_{SD}$. We induce SD label noise into the pairwise data discussed above to obtain noisy datasets. We flip the SD label based on the noise rate for every pair of examples in the set $S_{SD}$. This noisy data is denoted as $S_{SD}^{\eta}$. We vary the noise rate ($ \eta $) for uniform noise cases from 10\% to 40\%. And for SD conditional SD noise, we provide results on 4 combinations of $ \eta_{S} $ and $ \eta_{D} $. These combinations are (15\%,20\%), (20\%,10\%), (20\%,25\%) and (30\%,25\%).

\subsection{Experimental Setup for The Proposed Approach}
We train a neural network with three hidden layers for each dataset and noise rate using pairwise similar, dissimilar data. For the CIFAR-10 dataset, we use transfer learning to learn the classifier by adding the layers mentioned above to the pre-trained VGG-19 model. The best architecture of the classifier we used for experiments is a three-layer deep neural network with rectified linear unit (ReLU) activation function between any two consecutive layers. The number of nodes in the first, second, and third hidden layers is 128, 32, and 8, respectively. The output layer uses the sigmoid activation function. We trained with a mini-batch of size 256. We used simple gradient descent with decreasing step size and a momentum term (Adam optimizer). We used PyTorch \citep{NEURIPS2019_9015} library for deep learning. For Adam optimizer, we used the default values (learning rate =0.001, $\beta_1=0.9,\;\beta_2= 0.999, \epsilon=10^{-8}$, weight decay=0).

We train neural networks with different loss functions (sigmoid, ramp, and absolute error loss). The risk function is a modification of the binary classification risk function that accommodates the pairwise classification problem. We used a 10-fold cross-validation approach to generate all the results. The results reported are averaged over ten such independent runs. We report results for each setting with standard deviation.

\subsection{Experiments with Noise Rate Estimation}
As discussed earlier, noise rate estimation is required for SD conditional SD label noise. For the noise rate estimation classifier, we use one hidden layer (with eight nodes) neural network with a dropout layer between the hidden layer and output layer. We used rectified linear unit (ReLU) activation function in the hidden layer. We used the cross-entropy loss function with the Adam optimizer. We randomly used 80\% for training and 20\% for testing sets. The training set is corrupted with the required amount of class conditional label noise before being fed into the classifier for training. We obtain the estimated noise rates using the test set from the learnt classifier. We report estimated noise rates for each dataset obtained by averaging over ten independent runs. Noise rate estimation results are reported in Table~\ref{results-nre} for various datasets and noise rates. We observe that the noise rates estimated by the proposed noise rate estimation approach are very close to the actual noise rates.

\begin{table}[h]
  \caption{Experimental Results on Noise Rate Estimation}
  \label{results-nre}
  \centering
  \begin{tabular}{llllll}
    \toprule
    & Adult & Breast Cancer & Ionosphere & Phishing & W8a\\
    \midrule
     Original & Estimated & Estimated & Estimated & Estimated & Estimated \\
     $ (\eta_{S},\eta_{D}) $  & $ (\hat{\eta}_{S},\hat{\eta}_{D}) $ & $ (\hat{\eta}_{S},\hat{\eta}_{D}) $ & $ (\hat{\eta}_{S},\hat{\eta}_{D}) $ & $ (\hat{\eta}_{S},\hat{\eta}_{D}) $ & $ (\hat{\eta}_{S},\hat{\eta}_{D})$ \\
     in \% & in \% & in \% & in \% & in \% & in \%\\
     \midrule
     (15,20) & (14.77,  22.52) & (11.50, 18.29) & (17.00, 19.65) & (11.39, 18.34) & (11.91, 16.17) \\
     (20, 10) & (20.81,13.81) & (16.55, 07.77) & (21.83, 09.71) & (15.84, 13.24) & (15.25, 07.71)\\
     (20, 25) & (19.55,24.62) & (16.58, 25.12) & (22.69, 25.40) & (16.59, 25.07) & (16.66, 21.07) \\
     (30, 25) & (26.38, 25.48)  & (28.11, 23.73) & (33.12, 24.75) & (26.20, 25.17) & (26.51, 22.39)\\
    \bottomrule
  \end{tabular}
\end{table}

\subsection{Baseline Used}
We used a loss correction-based approach \citet{10.1007/978-3-030-86520-7_15} for learning from noisy similar and dissimilar data as a baseline. We use the acronym LCNiP (Loss Correction based approach for learning using NoIsy Pairwise data) for this baseline. The authors have experimented only with this work's squared error loss function. So, we also generate results with this approach for the squared error loss function. We used the proposed method's network architecture and experimental setting to make a fair comparison. Note that this approach also requires noise rate as an input. We used the noise rates estimated by the proposed noise rate estimation approach. For this approach also, we used 10-fold cross-validation to generate all the results. The results reported are averaged over ten such independent runs along with standard deviation.

\begin{table}[h]
  \caption{Experimental Results With Uniform SD Label Noise}
  \label{results-un}
  \centering
  \begin{tabular}{|l|l|lll|l|}
    \toprule
    & & \multicolumn{3}{c|}{RoLNiP} & \multicolumn{1}{c|}{LCNiP}  \\
    \cmidrule(r){3-5}
    \cmidrule(r){6-6}
     & Noise & Sigmoid Loss & Ramp Loss & Absolute &  Squared\\
     & Rate ($ \eta $) & & & Error Loss & Error Loss \\
     \midrule
     & 0\% & $ 86.35 \pm 0.09 $ & $ {\bf 87.16} \pm 0.07 $ & $ 86.37 \pm 0.05 $ &  $ 87.02 \pm 0.13 $ \\
     & 10\% & $ 85.87 \pm 0.12 $ & $ {\bf 86.25} \pm 0.10 $ & $ 85.83 \pm 0.06 $  & $ 84.77 \pm 0.75 $ \\
     Adult & 20\% & $ 85.33 \pm 0.05 $ & $ {\bf 85.46} \pm 0.22 $ & $ 85.37 \pm 0.08 $  & $ 82.22 \pm 0.33 $ \\
     & 30\% & $ 84.30 \pm 0.20 $ & $ 84.25 \pm 0.25 $ & $ {\bf 84.34} \pm 0.13 $ &  $ 77.15 \pm 0.62 $\\
     & 40\% & $ 81.48 \pm 0.36 $ & $ 79.61 \pm 1.07 $ & $ {\bf 81.95} \pm 0.27 $  & $ 65.58 \pm 1.89 $\\
    \midrule
     & 0\% & $ 98.93 \pm 0.05 $ & $ {\bf 99.01} \pm 0.22 $ & $ 98.86 \pm 0.07 $ &  $ 98.67 \pm 0.09 $ \\
     & 10\% & $ 98.65 \pm 0.03 $ & $ {\bf 98.74} \pm 0.02 $ & $ 98.65 \pm 0.07 $  & $ 98.47 \pm 0.86 $\\
     Breast & 20\% & $ {\bf 98.46} \pm 0.03 $ & $ 98.38 \pm 0.03 $ & $ 98.44 \pm 0.02 $ & $ 97.95 \pm 1.17 $\\
     Cancer & 30\% & $ 98.44 \pm 0.04 $ & $ {\bf 98.48} \pm 0.03 $ & $ {\bf 98.44} \pm 0.04 $  & $ 96.09 \pm 1.34 $ \\
     & 40\% & $ 97.70 \pm 0.28 $ & $ {\bf 98.22} \pm 0.28 $ & $ 97.80 \pm 0.21 $  & $ 80.18 \pm 1.63 $ \\
    \midrule
     & 0\% & $ 98.40 \pm 0.14 $ & $ 98.64 \pm 0.58 $ & $ 98.76 \pm 0.16 $  & $ {\bf 99.12} \pm 0.21 $ \\
     & 10\% & $ 98.58 \pm 0.15 $ & $ 98.76 \pm 0.40 $ & $ 98.53 \pm 0.17 $ &  $ {\bf 98.84} \pm 0.84 $ \\
     Ionosphere & 20\% & $ 97.99 \pm 0.16 $ & $ 98.09 \pm 0.63 $ & $ {\bf 98.59} \pm 0.20 $ & $ 98.02 \pm 0.22 $ \\
     & 30\% & $ 96.12 \pm 0.17 $ & $ 96.31 \pm 0.77 $ & $ {\bf 98.20} \pm 0.22 $ & $ 95.73 \pm 0.76 $ \\
     & 40\% & $ 91.04 \pm 0.26 $ & $ 90.62 \pm 0.89 $ & $ {\bf 95.04} \pm 0.28 $  & $ 80.08 \pm 3.07 $ \\
    \midrule
     & 0\% & $ 94.32 \pm 0.92 $ & $ {\bf 95.35} \pm 0.12 $ & $ 94.00 \pm 1.15 $ &  $ 94.28 \pm 0.81 $ \\
     & 10\% & $ 93.58 \pm 1.28 $ & $ {\bf 95.00} \pm 0.11 $ & $ 94.59 \pm 0.04 $  & $ 94.15 \pm 1.02 $ \\
     Phishing & 20\% & $ 93.70 \pm 1.09 $ & $ {\bf 94.73} \pm 0.12 $ & $ 93.66 \pm 1.16 $  & $ 93.34 \pm 1.61 $ \\
     & 30\% & $ 93.80 \pm 1.44 $ & $ 91.78 \pm 7.24 $ & $ {\bf 94.23} \pm 0.10 $ & $ 92.12 \pm 0.84 $ \\
     & 40\% & $ {\bf 90.00} \pm 0.28 $ & $ 87.07 \pm 5.68 $ & $ 89.76 \pm 0.62 $ &  $ 82.71 \pm 2.30 $ \\
    \midrule
     & 0\% & $ {\bf 98.08} \pm 0.05 $ & $ 97.96 \pm 0.21 $ & $ 98.05 \pm 0.06 $ &  $ 93.68 \pm 0.35 $ \\
     & 10\% & $ {\bf 97.59} \pm 0.09 $ & $ 97.29 \pm 0.41 $ & $ {\bf 97.59} \pm 0.09 $ & $ 93.28 \pm 0.36 $ \\
     W8a & 20\% & $ {\bf 96.68} \pm 0.08 $ & $ 96.12 \pm 0.28 $ & $ 96.63 \pm 0.13 $ &  $ 90.56 \pm 0.50 $ \\
     & 30\% & $ {\bf 94.62} \pm 0.16 $ & $ 93.00 \pm 0.58 $ & $ 94.51 \pm 0.13 $ &  $ 87.12 \pm 1.24 $ \\
     & 40\% & $ {\bf 86.87} \pm 0.23 $ & $ 85.01 \pm 1.10 $ & $ 86.82 \pm 0.28 $ &  $ 73.35 \pm 2.78 $ \\
     \midrule
     & 0\% & $ 92.24 \pm 0.99 $ & $ 92.10 \pm 1.10 $ & $ 91.96 \pm 0.52 $ & $ {\bf 92.43} \pm 0.85 $ \\
     & 10\% & $ {\bf 92.10} \pm 0.65 $ & $ 91.46 \pm 0.73 $ & $ 90.91 \pm 1.24 $ & $ 91.80 \pm 0.86 $ \\
     CIFAR-10 & 20\% & $ {\bf 91.10} \pm 0.86 $ & $ 90.77 \pm 0.60 $ & $ 90.45 \pm 1.27 $ & $ 90.53 \pm 0.77 $ \\
     & 30\% & $ {\bf 89.73} \pm 1.15 $ & $ 88.03 \pm 1.73 $ & $ 88.38 \pm 1.79 $ & $ 89.54 \pm 1.56 $ \\
     & 40\% & $ 84.65 \pm 1.76 $ & $ 84.78 \pm 1.94 $ & $ {\bf 85.68} \pm 2.06 $ & $ 84.05 \pm 1.33 $ \\ 
    \bottomrule
  \end{tabular}
\end{table}

\begin{table}[t]
  \caption{Experimental Results with SD Conditional SD Label Noise (Known Noise Rates)}
  \label{results-cc}
  \centering
  \begin{tabular}{|l|ll|lll|l|}
    \toprule
    & & & \multicolumn{3}{c|}{RoLNiP} & \multicolumn{1}{c|}{LCNiP}  \\
    \cmidrule(r){4-6}
    \cmidrule(r){7-7}
    & \multicolumn{2}{c|}{Noise Rates} & Sigmoid Loss & Ramp Loss & Absolute & Squared\\
     & $ \eta_{S} $ & $ \eta_{D} $ & & & Error Loss & Error Loss \\
     \midrule
     & 0\% & 0\% & $ 86.35 \pm 0.09 $ & $ {\bf 87.16} \pm 0.07 $ & $ 86.37 \pm 0.05 $ &  $ 87.02 \pm 0.13 $ \\
     & 15\% & 20\% & $ 84.79 \pm 0.10 $ & $ {\bf 84.90} \pm 0.11 $ & $ 84.84 \pm 0.13 $ & $ 82.94 \pm 0.91 $ \\
     Adult & 20\% & 10\% & $ 84.05 \pm 0.53 $ & $ 84.03 \pm 0.54 $ & $ {\bf 84.17} \pm 0.11 $ & $ 78.54 \pm 1.30 $ \\
     & 20\% & 25\% & $ 84.26 \pm 0.15 $ & $ 84.33 \pm 0.14 $ & $ {\bf 84.38} \pm 0.20 $ & $ 81.63 \pm 0.47 $ \\
     & 30\% & 25\% & $ 83.40 \pm 0.39 $ & $ 83.49 \pm 0.13 $ & $ {\bf 83.56} \pm 0.10 $ & $ 73.89 \pm 1.35 $ \\
    \midrule
     & 0\% & 0\% & $ 98.93 \pm 0.05 $ & $ {\bf 99.01} \pm 0.22 $ & $ 98.86 \pm 0.07 $ &  $ 98.67 \pm 0.09 $ \\
     & 15\% & 20\% & $ {\bf 98.63} \pm 0.05 $ & $ 98.60 \pm 0.04 $ & $ {\bf 98.63} \pm 0.06 $ & $ 98.49 \pm 0.09 $ \\
     Breast & 20\% & 10\% & $ {\bf 98.70} \pm 0.08 $ & $ 98.65 \pm 0.04 $ & $ 98.65 \pm 0.04 $ & $ 95.92 \pm 1.36 $ \\
     Cancer & 20\% & 25\% & $ {\bf 98.61} \pm 0.05 $ & $ 98.60 \pm 0.04 $ & $ 98.58 \pm 0.04 $ & $ 98.08 \pm 0.86 $ \\
     & 30\% & 25\% & $ 98.50 \pm 0.08 $ & $ 98.53 \pm 0.06 $ & $ {\bf 98.56} \pm 0.06 $ & $ 93.09 \pm 1.54 $ \\
    \midrule
     & 0\% & 0\% & $ 98.40 \pm 0.14 $ & $ 98.64 \pm 0.58 $ & $ 98.76 \pm 0.16 $  & $ {\bf 99.12} \pm 0.21 $ \\
     & 15\% & 20\% & $ 98.64 \pm 0.10 $ & $ 98.60 \pm 0.14 $ & $ {\bf 98.65} \pm 0.07 $ & $ 97.83 \pm 0.26 $ \\
     Ionosphere & 20\% & 10\% & $ 97.19 \pm 0.35 $ & $ 97.11 \pm 0.20 $ & $ 97.14 \pm 0.21 $ & $ {\bf 97.62} \pm 0.59 $ \\
     & 20\% & 25\% & $ {\bf 98.56} \pm 0.06 $ & $ 98.48 \pm 0.07 $ & $ 98.51 \pm 0.08 $ & $ 96.62 \pm 0.30 $ \\
     & 30\% & 25\% & $ 95.94 \pm 0.26 $ & $ 95.90 \pm 0.25 $ & $ {\bf 96.02} \pm 0.28 $ & $ 94.41 \pm 2.02 $ \\
    \midrule
     & 0\% & 0\% & $ 94.32 \pm 0.92 $ & $ {\bf 95.35} \pm 0.12 $ & $ 94.00 \pm 1.15 $ &  $ 94.28 \pm 0.81 $ \\
     & 15\% & 20\% & $ 94.38 \pm 0.07 $ & $ 94.40 \pm 0.10 $ & $ {\bf 94.42} \pm 0.07 $ & $ 93.48 \pm 0.94 $ \\
     Phishing & 20\% & 10\% & $ 93.30 \pm 1.20 $ & $ {\bf 93.75} \pm 0.26 $ & $ 93.04 \pm 2.03 $ & $ 90.65 \pm 2.26 $ \\
     & 20\% & 25\% & $ 94.08 \pm 0.19 $ & $ {\bf 94.22} \pm 0.13 $ & $ 93.98 \pm 0.76 $ & $ 92.66 \pm 1.56 $ \\
     & 30\% & 25\% & $ 93.15 \pm 1.63 $ & $ {\bf 93.54} \pm 0.26 $ & $ 93.51 \pm 1.16 $ & $ 89.45 \pm 1.99 $ \\
    \midrule
     & 0\% & 0\% & $ {\bf 98.08} \pm 0.05 $ & $ 97.96 \pm 0.21 $ & $ 98.05 \pm 0.06 $ &  $ 93.68 \pm 0.35 $ \\
     & 15\% & 20\% & $ 96.66 \pm 0.09 $ & $ {\bf 96.69} \pm 0.10 $ & $ 96.62 \pm 0.07 $ & $ 89.82 \pm 1.04 $ \\
     W8a & 20\% & 10\% & $ {\bf 96.93} \pm 0.12 $ & $ 96.75 \pm 0.15 $ & $ 96.80 \pm 0.11 $ & $ 93.63 \pm 0.89 $ \\
     & 20\% & 25\% & $ 95.90 \pm 0.11 $ & $ 95.88 \pm 0.09 $ & $ {\bf 95.97} \pm 0.15 $ & $ 88.86 \pm 0.62 $ \\
     & 30\% & 25\% & $ {\bf 94.38} \pm 0.09 $ & $ 94.18 \pm 0.22 $ & $ 94.15 \pm 0.20 $ & $ 87.21 \pm 1.10 $ \\
     \midrule
     & 0\% & 0\% & $ 92.24 \pm 0.99 $ & $ 92.10 \pm 1.10 $ & $ 91.96 \pm 0.52 $ & $ {\bf 92.43} \pm 0.85 $ \\
     & 15\% & 20\% & $ {\bf 90.80} \pm 1.04 $ & $ 90.53 \pm 2.40 $ & $ 90.27 \pm 1.32 $ & $ 90.72 \pm 3.97 $ \\
     CIFAR-10 & 20\% & 10\% & $ 90.99 \pm 0.81 $ & $ 91.10 \pm 0.98 $ & $ 90.27 \pm 1.21 $ & $ {\bf 92.96} \pm 0.75 $ \\
     & 20\% & 25\% & $ 89.55 \pm 1.10 $ & $ {\bf 90.01} \pm 1.15 $ & $ 89.75 \pm 1.32 $ & $ 89.09 \pm 2.28 $ \\
     & 30\% & 25\% & $ 89.82 \pm 0.98 $ & $ 89.71 \pm 1.33 $ & $ 89.75 \pm 1.33 $ & $ {\bf 91.26} \pm 0.86 $ \\
    \bottomrule
  \end{tabular}
\end{table}

\begin{table}[t]
  \caption{Experimental Results with SD Conditional SD Label Noise (Estimated Noise Rates)}
  \label{results-cce}
  \centering
  \begin{tabular}{|l|ll|lll|l|}
    \toprule
    & & & \multicolumn{3}{c|}{RoLNiP} & \multicolumn{1}{c|}{LCNiP}  \\
    \cmidrule(r){4-6}
    \cmidrule(r){7-7}
    & \multicolumn{2}{c|}{Noise Rates} & Sigmoid Loss & Ramp Loss & Absolute & Squared\\
     & $ \eta_{S} $ & $ \eta_{D} $ & & & Error Loss & Error Loss \\
     \midrule
     & 0\% & 0\% & $ 86.35 \pm 0.09 $ & $ {\bf 87.16} \pm 0.07 $ & $ 86.37 \pm 0.05 $ &  $ 87.02 \pm 0.13 $ \\
     & 15\% & 20\% & $ {\bf 85.31} \pm 0.09 $ & $ 83.98 \pm 0.33 $ & $ 84.20 \pm 0.14 $ & $ 81.72 \pm 0.54 $ \\
     Adult & 20\% & 10\% & $ 85.29 \pm 0.08 $ & $ 85.17 \pm 0.11 $ & $ {\bf 85.40} \pm 0.06 $ & $ 75.88 \pm 1.01 $ \\
     & 20\% & 25\% & $ 84.84 \pm 0.12 $ & $ {\bf 84.9} \pm 0.29 $ & $ 84.68 \pm 0.14 $ & $ 80.08 \pm 0.50 $ \\
     & 30\% & 25\% & $ {\bf 84.00} \pm 0.13 $ & $ 81.8 \pm 0.64 $ & $ 83.40 \pm 0.20 $ & $ 69.54 \pm 1.95 $ \\
    \midrule
    & 0\% & 0\% & $ 98.93 \pm 0.05 $ & $ {\bf 99.01} \pm 0.22 $ & $ 98.86 \pm 0.07 $ &  $ 98.67 \pm 0.09 $ \\
     & 15\% & 20\% & $ {\bf 98.64} \pm 0.03 $ & $ 97.98 \pm 0.09 $ & $ 98.28 \pm 0.17 $ & $ 98.43 \pm 0.14 $ \\
     Breast & 20\% & 10\% & $ 98.73 \pm 0.03 $ & $ {\bf 99.02} \pm 0.02 $ & $ 98.71 \pm 0.04 $ & $ 96.99 \pm 0.63 $ \\
     Cancer & 20\% & 25\% & $ {\bf 98.64} \pm 0.04 $ & $ 97.80 \pm 2.02 $ & $ 98.62 \pm 0.03 $ & $ 98.03 \pm 0.92 $ \\
     & 30\% & 25\% & $ 98.57 \pm 0.09 $ & $ {\bf 98.91} \pm 0.07 $ & $ 98.57 \pm 0.06 $ & $ 93.80 \pm 1.46 $ \\
    \midrule
     & 0\% & 0\% & $ 98.40 \pm 0.14 $ & $ 98.64 \pm 0.58 $ & $ 98.76 \pm 0.16 $  & $ {\bf 99.12} \pm 0.21 $ \\
     & 15\% & 20\% & $ 96.70 \pm 0.32 $ & $ {\bf 98.49} \pm 0.17 $ & $ 98.24 \pm 0.16 $ & $ 97.67 \pm 0.81 $ \\
     Ionosphere & 20\% & 10\% & $ 97.43 \pm 0.19 $ & $ {\bf 97.74} \pm 0.12 $ & $ 94.50 \pm 0.32 $ & $ 97.57 \pm 0.71 $ \\
     & 20\% & 25\% & $ 98.52 \pm 0.05 $ & $ 97.79 \pm 0.20 $ & $ {\bf 98.57} \pm 0.07 $ & $ 96.29 \pm 0.78 $ \\
     & 30\% & 25\% & $ 95.06 \pm 0.24 $ & $ 94.47 \pm 0.30 $ & $ {\bf 97.32} \pm 0.15 $ & $ 95.52 \pm 1.10 $ \\
    \midrule
     & 0\% & 0\% & $ 94.32 \pm 0.92 $ & $ {\bf 95.35} \pm 0.12 $ & $ 94.00 \pm 1.15 $ &  $ 94.28 \pm 0.81 $ \\
     & 15\% & 20\% & $ 93.11 \pm 1.88 $ & $ {\bf 93.95} \pm 0.92 $ & $ 93.92 \pm 0.69 $ & $ 93.26 \pm 1.50 $ \\
     Phishing & 20\% & 10\% & $ {\bf 94.20} \pm 0.76 $ & $ 93.88 \pm 1.65 $ & $ 93.60 \pm 1.59 $ & $ 91.62 \pm 1.74 $ \\
     & 20\% & 25\% & $ 89.79 \pm 3.13 $ & $ {\bf 93.95} \pm 0.18 $ & $ 93.35 \pm 1.63 $ & $ 92.96 \pm 1.11 $ \\
     & 30\% & 25\% & $ 93.77 \pm 0.20 $ & $ 93.87 \pm 0.65 $ & $ {\bf 94.73} \pm 0.11 $ & $ 89.84 \pm 2.51 $ \\
    \midrule
     & 0\% & 0\% & $ {\bf 98.08} \pm 0.05 $ & $ 97.96 \pm 0.21 $ & $ 98.05 \pm 0.06 $ &  $ 93.68 \pm 0.35 $ \\
     & 15\% & 20\% & $ {\bf 96.84} \pm 0.09 $ & $ 95.69 \pm 0.49 $ & $ 96.78 \pm 0.07 $ & $ 89.94 \pm 0.60 $ \\
     W8a & 20\% & 10\% & $ 96.71 \pm 0.12 $ & $ {\bf 98.09} \pm 0.15 $ & $ 96.96 \pm 0.08 $ & $ 93.57 \pm 1.19 $ \\
     & 20\% & 25\% & $ 96.03 \pm 0.06 $ & $ 95.23 \pm 0.29 $ & $ {\bf 96.18} \pm 0.20 $ & $ 89.19 \pm 0.55 $ \\
     & 30\% & 25\% & $ 95.01 \pm 0.22 $ & $ {\bf 96.83} \pm 0.19 $ & $ 93.65 \pm 0.16 $ & $ 87.55 \pm 0.87 $ \\
    \bottomrule
  \end{tabular}
\end{table}

\subsection{Results with Uniform SD Label Noise}
For uniform SD label noise, the proposed approach RoLNiP does not require noise rate as an input. However, the loss correction-based approach LCNiP \citep{10.1007/978-3-030-86520-7_15} requires noise rate as input even for uniform noise cases. We used actual noise rates for LCNiP. The results for the uniform noise case are presented in Table~\ref{results-un}. We make the following observations.
\begin{itemize}
    \item RoLNiP Versus LCNiP: RoLNiP with sigmoid, ramp, and absolute loss outperforms the baseline method LCNiP on all the datasets and all the noise rates except for the Ionosphere dataset with 10\% noise rate. For higher noise rates (30\% and 40\%), the performance of LCNiP degrades a lot. In the 40\% noise rate, the accuracy of LCNiP drops by (a) 21.5\% for Adult dataset, (b) 18.5\% for Breast Cancer dataset, (c) 19\% for Ionosphere dataset, (d) 11.5\% for Phishing dataset and (e) 20\% for W8a dataset. 
    \item RoLNiP with Sigmoid Loss: It shows robustness against uniform SD label noise. This also confirms the applicability of Theorem~1 as sigmoid loss function satisfies the condition $l(f(\x),1)+l(f(\x),-1)=K$. In the case of 40\% noise, accuracy drops by (a) 5\% for Adult dataset, (b) 1\% for Breast Cancer dataset, (c) 7\% for Ionosphere dataset, (d) 4.3\% for Phishing dataset, (e) 11\% for W8a dataset and (f) 7.6\% for CIFAR-10.
    \item RoLNip with Ramp Loss: Ramp loss satisfies the symmetry condition required for robustness. The robustness of ramp loss is also reflected in the experiments. At 40\% noise, accuracy of RolNip drops by (a) 7.5\% for Adult dataset, (b) 0.73\% for Breast Cancer dataset, (c) 9\% for Ionosphere dataset, (d) 8\% for Phishing dataset, (e) 13\% for W8a dataset and (f) 7.3\% for CIFAR-10 dataset.
    \item RoLNiP with Absolute Error Loss: It shows robustness even though absolute error loss does not satisfy the robustness conditions mentioned in Theorem~1. This happens because those conditions are only sufficiency conditions. We see that at a 40\% noise rate, the accuracy drops by at least (a) 4.3\% for Adult dataset, (b) 1\% for Breast Cancer dataset, (c) 3.7\% for Ionosphere dataset, (d) 4.2\% for Phishing dataset, (e) 11.2\% for W8a dataset and (f) 6.3\% for CIFAR-10.
\end{itemize}

\subsection{Results with SD Conditional SD Label Noise}
Table~\ref{results-cc} shows the known noise rates, and Table~\ref{results-cce} shows the results with estimated noise rates. We make the following observations from the results.
\begin{itemize}
    \item RoLNiP Versus LCNiP: We observe that with actual as well as estimated noise rates, RoLNiP outperforms LCNiP for Adult, Breast Cancer, Phishing, and W8a datasets for all combinations of noise rates. For the Ionosphere dataset, RoLNiP outperforms LCNiP for all noise combinations except the (20\%,10\%) case. For the (20\%,10\%) case, the performance of RoLNiP is comparable to LCNiP. For the CIFAR-10 dataset, RolNiP and LCNiP perform comparably to each other except for the (30\%,25\%) noise case.
    \item RoLNiP with Sigmoid Loss, Ramp Loss and Absolute Error Loss: We observe that sigmoid loss, ramp loss and absolute error loss show robust behavior with SD conditional SD label noise.
\end{itemize} 
Thus, overall, the proposed approach RoLNiP outperforms the state-of-the-art LCNiP \citep{10.1007/978-3-030-86520-7_15} on different datasets and various noise rates.

\section{Conclusion}
In this paper, we proposed a robust approach for learning using noisy pairwise similar dissimilar data. We presented conditions under which risk minimization becomes robust to noise. We show that the proposed method is robust to uniform and conditional noise cases. The proposed approach does not require the noise rate to be known for uniform noise; however, the conditional noise case needs noise rate as input. We offer a provably correct approach for learning noise rates to deal with this situation. We experimentally show that the proposed method performs better than the state-of-the-art approaches in various noise settings and with multiple datasets.

\bibliography{example_paper}

\end{document}